\newcommand{\cmark}{\ding{61}}%
\newcommand{\xmark}{*}%
\newcommand{\smark}{\ding{69}}%
\ifcvprfinal\pagestyle{empty}\fi
\setlist[enumerate]{wide=0pt, leftmargin=15pt, labelwidth=0pt, align=left}
\newtheoremstyle{break}
  {\topsep}{\topsep}%
  {\itshape}{}%
  {\bfseries}{}%
  {\newline}{}%
\theoremstyle{break}
\newcommand{\insertimageC}[5]{ 
\begin{figure}[#5]
\centering
\includegraphics[width=#1\linewidth, clip=true]{figures/#2}
\caption{#3}
\label{#4}
\end{figure}
}
\newcommand{\insertimageStar}[5]{ 
\begin{figure*}[#5]
\centering
\includegraphics[width=#1\linewidth, clip=true]{figures/#2}
\caption{#3}
\label{#4}
\end{figure*}
}
\algnewcommand\algorithmicinput{\textbf{Input:}}
\algnewcommand\INPUT{\item[\algorithmicinput]}
\algnewcommand\algorithmicoutput{\textbf{Output:}}
\algnewcommand\OUTPUT{\item[\algorithmicinput]}
\newcommand{\comment}[1]{}
\let\emptyset\varnothing
\newcommand{\suchthat}{\;\ifnum\currentgrouptype=16 \middle\fi|\;}
\newcommand{\x}{\mathbf{x}}
\newcommand{\X}{\mathbf{X}}
\newcommand{\R}{\mathbb{R}}
\newcommand{\M}{\mathcal{M}}
\newcommand{\Pm}{\mathbf{P}}
\newcommand{\p}{\mathbf{p}}
\newcommand{\f}{\mathbf{f}}
\newcommand{\F}{\mathbf{F}}
\newcommand{\U}{\mathbf{U}}
\newcommand{\Grid}{\mathbf{G}}
\newtheorem{thm}{Theorem}
\newtheorem{dfn}{Definition}
\crefname{section}{§}{§§}
\Crefname{section}{§}{§§}
\crefname{section}{§}{§§}
\Crefname{section}{§}{§§}
\crefname{thm}{Thm.}{Thm}
\crefname{eq}{Eq.}{Eq}
\crefname{figure}{Fig.}{Figure}
\crefname{table}{Tab.}{Table}
\crefname{dfn}{Dfn.}{Dfn}
\begin{document}

\makeatletter
\newcommand{\printfnsymbol}[1]{%
  \textsuperscript{\@fnsymbol{#1}}%
}
\makeatother
\title{3D Point Capsule Networks}

\author[ \cmark\,\,\smark\,\,\thanks{\textit{First two authors contributed equally to this work.}}]{Yongheng Zhao}
\author[ \cmark\,\,\xmark\,\,\printfnsymbol{1}]{Tolga Birdal}
\author[ \cmark\,\,\xmark]{Haowen Deng}
\author[ \cmark]{Federico Tombari\vspace{-5pt}}
\affil[\cmark]{~Technische Universit\"{a}t M\"{u}nchen, Germany\quad $^\text{\smark}$~University of Padova, Italy}
\affil[\xmark]{~Siemens AG, M\"{u}nchen, Germany}

\maketitle

\begin{abstract}
In this paper, we propose 3D point-capsule networks, an auto-encoder designed to process sparse 3D point clouds while preserving spatial arrangements of the input data. 3D capsule networks arise as a direct consequence of our novel unified 3D auto-encoder formulation. Their dynamic routing scheme~\cite{sabour2017dynamic} and the peculiar 2D latent space deployed by our approach bring in improvements for several common point cloud-related tasks, such as object classification, object reconstruction and part segmentation as substantiated by our extensive evaluations. Moreover, it enables new applications such as part interpolation and replacement.
\end{abstract}

\section{Introduction}
\label{sec:intro}
Fueled by recent developments in robotics, autonomous driving and augmented/mixed reality, 3D sensing has become a major research trend in computer vision. Conversely to RGB cameras, the sensors used for 3D capture provide rich geometric structure, rather than high-fidelity appearance information. 
This is proved advantageous for those applications where color and texture are insufficient to accomplish the given task, such as reconstruction/detection of texture-less objects. Unlike the RGB camera case, 3D data come in a variety of forms: range maps, fused RGB-D sequences, meshes and point clouds, volumetric data. Thanks to their capability of representing a sparse 3D structure accurately while being agnostic to the sensing modality, point clouds have been a widespread choice for 3D processing.

The proliferation of deep learning has recently leaped into the 3D domain and architectures for consuming 3D points have been proposed either for volumetric~\cite{Riegler2017CVPR} or sparse~\cite{qi2017pointnet} 3D representations. These architectures overcame many challenges brought in by 3D data, such as order-invariance, complexity due to the added data dimension and local density variations. Unfortunately they often discard spatial arrangements in data, hence falling short of respecting the parts-to-whole relationship, which is critical to explain and describe 3D shapes; maybe even more severe than in the 2D domain due to the increased dimensionality~\cite{bellman2013dynamic}. 
\insertimageC{1}{teaser2_cropped.pdf}{Our \textit{3D-PointCapsNet} improves numerous 3D tasks while enabling interesting applications such as latent space part interpolation or complete part modification, an application where a simple cut-and-paste results in inconsistent outputs.\vspace{-5mm}}{fig:teaser}{t!}


In this work we first present a unified look to some well known 3D point decoders. Within this view, and based on the renowned 2D capsule networks (CN)~\cite{sabour2017dynamic}, we propose the unsupervised \textit{3D point-capsule networks} (3D-PointCapsNet), an auto-encoder for generic representation learning in unstructured 3D data. Powered by the built-in routing-by-agreement algorithm~\cite{sabour2017dynamic}, our network respects the geometric relationships between the parts, showing better learning ability and generalization properties. We design our 3D-PointCapsNet architecture to take into account the sparsity of point clouds by employing PointNet-like input layers~\cite{qi2017pointnet}. Through an unsupervised dynamic routing, we organize the outcome of multiple max-pooled feature maps into a powerful latent representation. This intermediary latent space is parameterized by \textit{latent capsules} - stacked latent activation vectors specifying the features of the shapes and their likelihood.

Latent capsules obtained from point clouds alleviate the restriction of parameterizing the latent space by a single, low dimensional vector; instead they give explicit control on the basis functions that get composed into 3D shapes.
We further propose a novel 3D point-set decoder operating on these capsules, leading to better reconstructions with increased operational capabilities as shown in~\cref{fig:teaser}. These new abilities stem from the latent capsules instantiating as various shape parameters and concentrating not spatially but semantically across the shape under consideration, even when trained in an unsupervised fashion. We also propose to supply a limited amount of task-specific supervision such that the individual capsules can excel at solving individual sub-problems, e.g. if the task is part-based segmentation, they specialize on different meaningful parts of each shape.

Our extensive quantitative and qualitative evaluation demonstrates the superiority of our architecture. First, we advance the state of the art by a significant margin on multiple frontiers such as 3D local feature extraction, point cloud reconstruction and transfer learning. Next, we show that the distinct attention mechanism of the capsules, driven by dynamic routing, allows a wider range of 3D applications compared to the state of the art auto-encoders: a) part replacement, b) part-by-part animation via interpolation. Note that both of these tasks are non-trivial for standard architectures that rely on 1D latent vectors. Finally, we present improved generalization to unseen data, reaching accuracy levels up to $85\%$ even when using $1\%$ of training data. 
\insertimageStar{0.9725}{pipeline_final_cropped.pdf}{3D Point Capsule Networks. Our capsule-encoder accepts an $N\times 3$ point cloud as input and uses an MLP to extract $N\times 128$ features from it. These features are then sent into multiple independent convolutional-layers with different weights, each of which is max-pooled to a size of $1024$. The pooled features are then concatenated to form the \textit{primary point capsules} (PPC) ($1024\times 16$). A subsequent dynamic routing clusters the PPC into the final \textit{latent capsules}. Our decoder, responsible for reconstructing point sets given the latent features, endows the latent capsules with random 2D grids and applies MLPs $(64-64-32-16-3)$ to generate multiple point patches. These point patches target different regions of the shape thanks to the DR~\cite{sabour2017dynamic}. Finally, we collect all the patches into a final point cloud and measure the Chamfer distance to the input to guide the network to find the optimal reconstruction. In figure, part-colors encode capsules.\vspace{-3.5mm}}{fig:3dpointcapsnet}{t!}
In a nutshell, our core contributions are: 
\begin{enumerate}[itemsep=0ex]
\item Motivated by a unified perspective of the common point cloud auto-encoders, we propose capsule networks for the realm of 3D data processing as a powerful and effective tool. \item We show that our point-capsule AE can surpass the current art in reconstruction quality, local 3D feature extraction and transfer learning for 3D object recognition.
\item We adapt our latent capsules to different tasks with semi-supervision and show that the latent capsules can master on peculiar parts or properties of the shape. In the end, this paves the way to higher quality predictions and a diverse set of applications like part specific interpolation.
\end{enumerate}\vspace{0.3pt}
Our source code is publicly available under:\\~{\small{\url{https://tinyurl.com/yxq2tmv3}}}\normalfont.

\section{Related Work}
\label{sec:related_work}

\paragraph{Point Clouds in Deep Networks}
Thanks to their generic capability of efficiently explaining 3D data without making assumptions on the modality, point clouds are the preferred containers for many 3D applications~\cite{Zhou_2018_CVPR,naseer2018indoor}. Due to this widespread use, recent works such as PointNet~\cite{qi2017pointnet}, PointNet++~\cite{Qi2017PointNetDH}, SO-Net~\cite{li2018so}, spherical convolutions~\cite{lei2018spherical}, Monte Carlo convolutions~\cite{hermosilla2018monte} and dynamic graph networks~\cite{dgcnn} have all devised point cloud-specific architectures that exploited the sparsity and permutation-invariant properties of 3D point sets. It is also common to process point sets by using local projections reducing the convolution operation down to two dimensions ~\cite{tatarchenko2018tangent,huang2018learning}.

Recently, unsupervised architectures followed up on their supervised counterparts. PU-Net~\cite{yu2018pu} proposed better upsampling schemes to be used in decoding. FoldingNet~\cite{Yang_2018_CVPR} introduced the idea of deforming a 2D grid to decode a 3D surface as a point set. PPF-FoldNet~\cite{Deng_2018_ECCV} improved upon the supervised PPFNet~\cite{deng2018ppfnet} in local feature extraction by benefiting from FoldingNet's decoder~\cite{Yang_2018_CVPR}. AtlasNet~\cite{groueix2018} can be seen as an extension of FoldingNet to multiple grid patches and provided extended capabilities in data representation. 
PointGrow~\cite{sun2018pointgrow} devised an auto-regressive model for both unconditional and conditional point cloud generation leading to effective unsupervised feature learning.
Achlioptas~\etal~\cite{achlioptas2017latent_pc} adapted GANs to 3D point sets, paving the way to enhanced generative learning. 

\vspace{-1.5mm}\paragraph{2D Capsule Networks}
Thanks to their general applicability, capsule networks (CNs) have found tremendous use in 2D deep learning. 
LaLonde and Bagci~\cite{lalonde2018capsules} developed a deconvolutional capsule network, called \textit{SegCaps}, tackling object segmentation.
Durate~\etal~\cite{duarte2018videocapsulenet} extended CNs to action segmentation and classification by introducing~\textit{capsule-pooling}.
Jaiswal~\etal~\cite{jaiswal2018capsulegan}, Saqur~\etal~\cite{saqur2018capsgan} and Upadhyay~\etal~\cite{upadhyay2018generative} proposed Capsule-GANs, \ie capsule network variants of the standard generative adversarial networks (GAN)~\cite{goodfellow2014generative}. These have shown better 2D image generation performance. Lin~\etal~\cite{lin2018learning} showed that capsule representations learn more meaningful 2D manifold embeddings than neurons in a standard CNN do.

There have also been significant improvements upon the initial CN proposal. Hinton~\etal improved the routing by EM algorithm ~\cite{hinton2018matrix}. 
Wang and Liu saw the routing as an optimization minimizing a combination
of clustering-like loss and a KL regularization term~\cite{wang2018optimization}. 
Chen and Crandall~\cite{chen2018generalized} suggested \textit{trainable routing} for better clustering of capsules.
Zhang~\etal~\cite{zhang2018fast} unified the existing routing methods under one umbrella and proposed weighted kernel density estimation based routing methods.
Zhang~\etal~\cite{zhang2018cappronet} chose to use the norm to explain the existence of an entity and proposed to learn a group of capsule subspaces onto which an input feature vector is projected. 
Lenssen~\etal~\cite{lenssen2018group} introduced guaranteed equivariance and invariance properties to capsule networks by the use of group convolutions. 

\vspace{-1.5mm}\paragraph{3D Capsule Networks} Up until now, the use of the capsule idea in the 3D domain has been a rather uncharted territory. Weiler~\etal~\cite{weiler20183d} rigorously formalized the convolutional capsules and presented a convolutional neural network (CNN) equivariant to rigid motions. Jimenez~\etal~\cite{jimenez2018capsule} as well as Mobniy and Nguyen~\cite{mobiny2018fast} extended capsules to deal with volumetric medical data. VideoCapsuleNet~\cite{duarte2018videocapsulenet} also used a volumetric representation to handle temporal frames of the video. Yet, to the best of our knowledge, we are the first to devise a capsule network specifically for 3D point clouds, exploiting their sparse and unstructured nature for representing 3D surfaces.
\section{Method}
\label{sec:method}
 \begin{figure*}[!th]
   \centering
    \includegraphics[width=0.99\linewidth]{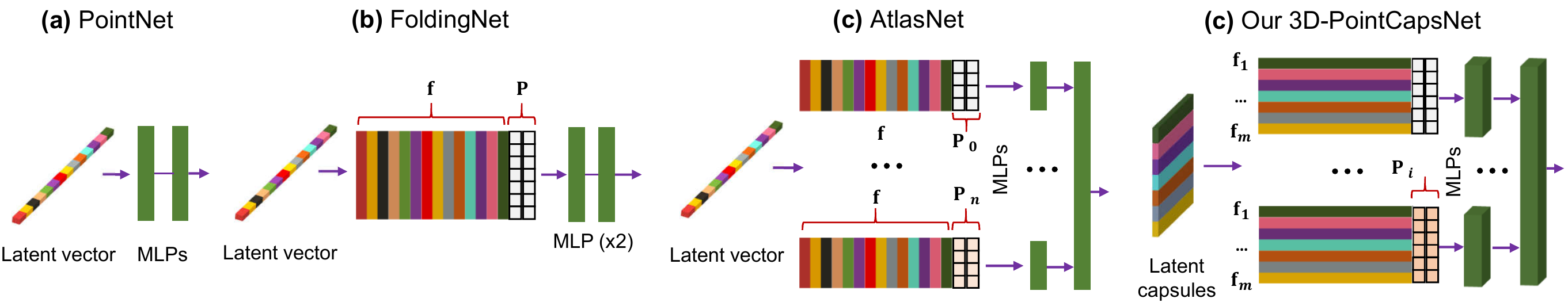}
    \caption{Comparison of four different state-of-the-art 3D point decoders. PointNet uses a single latent vector, and no surface assumption. Thus, $\bm{\theta}_{\text{pointnet}}=\f$. FoldingNet~\cite{Yang_2018_CVPR} learns a 1D latent vector along with a fixed 2D grid $\bm{\theta}_{\text{folding}}=\{ \f,\,\Pm\}$. The advanced AtlasNet~\cite{groueix2018} learns to deform multiple 2D configurations onto local 2-manifolds: $\bm{\theta}_{\text{atlas}}=\{ \f,\,\{\Pm_i\}\}$. Our point-capsule-network is capable of learning multiple latent representations each of which can fold a distinct 2D grid onto a specific local patch, $\bm{\theta}_{\text{ours}}=\{ \{\f_i\},\,\{\Pm_i\}\}$}
    \label{fig:aecompare}
\end{figure*}
\subsection{Formulation}
We first follow the AtlasNet convention~\cite{groueix2018} and present a unified view of some of the common 3D auto-encoders. Then, we explain our \textit{3D-PointCapsNet} within this geometric perspective and justify its superiority compared to its ancestors. We will start by recalling the basic concepts:
\begin{dfn}[Surface and Point Cloud]
\label{dfn:surface}
A 3D surface (\textit{shape}) is a differentiable 2-manifold embedded in the ambient 3D Euclidean space: $\M^2\in \R^3$. We approximate a \textbf{point cloud} as a sampled discrete subset of the surface $\X=\{\x_i \in \M^2 \cap \R^3\}$.
\end{dfn}
\begin{dfn}[Diffeomorphism]
\label{dfn:diff}
A diffeomorphism is a continuous, invertible, structure-preserving map between two differentiable surfaces.
\end{dfn}
\begin{dfn}[Chart and Parametrization]
\label{dfn:chart}
We admit an open set $U\in\R^2$ and a diffeomorphism $C: \M^2 \mapsto U\in \R^2$ mapping an open neighborhood in 3D to its 2D embedding. $C$ is called a \textbf{chart}. Its inverse, $\Psi \equiv C^{-1}: \R^2\mapsto \M^2$ is called a \textbf{parameterization}.
\end{dfn}
\begin{dfn}[Atlas]
\label{dfn:atlas}
A set of charts with images covering the 2-manifold is called an \textbf{atlas}: $\mathcal{A}=\cup_i C_i(\x_i)$.
\end{dfn}
\noindent A 3D auto-encoder learns to generate a 3D surface $\X \in \M^2 \cap \R^{N \times 3}$. By virtue of~\cref{dfn:chart} $\Psi$ deforms a 2D point set to a surface. The goal of the generative models that are of interest here is to learn $\Psi$ to best reconstruct $\hat{\X}\approx\X$:
\begin{dfn}[Problem]
\label{dfn:problem}
Learning to generate the 2-manifolds is defined as finding function(s) $\Psi(U\,|\,\bm{\theta})\,:\, \Psi(U\,|\,\bm{\theta})\approx \X$~\cite{groueix2018}. $\bm{\theta}$ is a lower dimensional parameterization of these functions: $|\bm{\theta}|<|\X|$.
\end{dfn}
\begin{thm}
\label{thm:folding}
Given that $C^{-1}$ exists, $\Psi$, chosen to be a 3-layer MLP, can reconstruct arbitrary 3D surfaces.
\end{thm}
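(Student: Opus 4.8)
The plan is to read ``reconstruct'' as ``approximate to arbitrary accuracy'' and to obtain the statement as a direct corollary of the classical universal approximation theorem for single-hidden-layer networks. First I would unpack the hypothesis: by \cref{dfn:chart}, saying that $C^{-1}$ exists means that $\Psi \equiv C^{-1}\colon U \to \M^2 \cap \R^3$ is a well-defined and continuous (in fact smooth, since $C$ is a diffeomorphism by \cref{dfn:diff}) map whose image is exactly the target surface patch. Since a point cloud in the sense of \cref{dfn:surface} is a \emph{finite} sample $\X=\{\x_i\}$, it suffices to control $\Psi$ on a compact set: take the closure of $U$, or just a compact neighbourhood of the finitely many chart images $\{C(\x_i)\}\subset\R^2$; call it $K$.

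Next I would invoke the universal approximation theorem (Cybenko/Hornik): for every continuous $g\colon K\to\R$, every $\varepsilon>0$, and any non-polynomial activation (sigmoid, ReLU, $\ldots$), there is a network with a single hidden layer — a $3$-layer MLP in the sense intended here (input, hidden, output) — with $\sup_{\p\in K}\lvert g(\p)-\mathrm{MLP}(\p)\rvert<\varepsilon$. Applying this componentwise to the three coordinate functions of $\Psi$ produces an MLP $\Psi(\,\cdot\mid\bm\theta)$ with $\sup_{\p\in K}\norm{\Psi(\p)-\Psi(\p\mid\bm\theta)}<\varepsilon$; the weights $\bm\theta$ are the low-dimensional parameterization demanded by \cref{dfn:problem}, since their count is fixed while $|\X|$ may grow. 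Translating the pointwise bound to sets, $\Psi(K\mid\bm\theta)$ lies within Hausdorff distance $\varepsilon$ of $\M^2$, and in particular it reproduces the sampled $\X$ up to $\varepsilon$; letting $\varepsilon\to0$ gives reconstruction of an arbitrary surface. For a manifold not covered by one chart (e.g.\ $\Sp^2$), the same argument is run on each chart of an atlas (\cref{dfn:atlas}) and the patch-wise images are unioned — precisely the multi-grid decoder of \cref{fig:aecompare}.

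The step I expect to be the crux is not the approximation itself, which is a verbatim citation, but the reductions around it: arguing that the relevant domain is compact (handled by the discreteness of point clouds), being explicit that ``reconstruct'' means ``approximate arbitrarily well'' rather than equality — the MLP image is always a deformed copy of $U$ and can only meet $\M^2$ in the limit — and, most delicately, noting that the theorem has \emph{assumed} the hardest topological ingredient, namely the existence of a single global $\Psi=C^{-1}$; without it one genuinely needs the atlas version, which is the conceptual bridge from FoldingNet to AtlasNet and to the proposed capsule decoder.
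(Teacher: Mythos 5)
Your proposal is correct and follows essentially the same route as the paper: the paper's proof is only a one-line citation to FoldingNet~\cite{Yang_2018_CVPR} and the universal approximation theorem, and your argument is just a careful expansion of exactly that UAT-based reasoning (compact chart domain, componentwise approximation, ``reconstruct'' read as arbitrarily accurate approximation). No discrepancy to report.
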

\begin{proof}
The proof is given in~\cite{Yang_2018_CVPR} and follows from the universal approximation theorem (UAT).
\end{proof}
\begin{thm}
\label{thm:atlasnet2}
There exists an integer $K$ s.t. an MLP with $K$ hidden units universally reconstruct $\X$ up to a precision $\epsilon$.
\end{thm}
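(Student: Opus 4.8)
The plan is to upgrade \cref{thm:folding} from a single \emph{global} chart to a \emph{finite atlas}, apply the universal approximation theorem (UAT) once per chart, and then bookkeep the total number of hidden units. Note that \cref{thm:folding} already gives the result whenever a global $C^{-1}$ exists; the point of \cref{thm:atlasnet2} is to drop that assumption, since generic surfaces (e.g. a sphere) admit no single chart.

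First I would exploit compactness. Working with the bounded region of $\M^2\cap\R^3$ that contains $\X$, this region is a compact differentiable 2-manifold, so by \cref{dfn:atlas} it admits a \emph{finite} atlas $\mathcal{A}=\cup_{i=1}^{M}C_i$ whose charts $C_i:\,V_i\subset\M^2\to U_i\subset\R^2$ already cover it. Shrinking each $V_i$ slightly, I may assume every parameterization $\Psi_i\equiv C_i^{-1}$ is defined and smooth on a \emph{compact} set $\bar U_i\subset\R^2$ while the images $\Psi_i(\bar U_i)$ still cover $\M^2$. This is the step where the global invertibility hypothesis of \cref{thm:folding} is replaced by the much weaker fact that a 2-manifold is \emph{locally} diffeomorphic to $\R^2$ (\cref{dfn:chart}).

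Next, fix a target precision $\epsilon$ and apply UAT (exactly as in \cref{thm:folding}) to each $\Psi_i$ separately: there is a 3-layer MLP $\hat\Psi_i$ with $K_i=K_i(\epsilon)$ hidden units such that $\sup_{u\in\bar U_i}\norm{\Psi_i(u)-\hat\Psi_i(u)}<\epsilon$. The AtlasNet-style decoder runs all of these patch-MLPs in parallel, each fed its own 2D grid $\Pm_i$, so the whole decoder is a single MLP with block-diagonal weights and $K=\sum_{i=1}^{M}K_i$ (equivalently $M\max_i K_i$ for a shared patch architecture) hidden units — a finite integer depending only on $\epsilon$, $M$, and the moduli of continuity of the $\Psi_i$.

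Finally I would convert the uniform bounds into the reconstruction claim. On the coverage side, every $\x\in\X\subset\M^2$ equals $\Psi_i(u)$ for some $i$ and $u\in\bar U_i$, hence lies within $\epsilon$ of the decoded point $\hat\Psi_i(u)$; on the no-spurious-points side, each $\hat\Psi_i(u)$ lies within $\epsilon$ of $\Psi_i(u)\in\M^2$, and sampling the grids $\Pm_i$ densely enough (which affects only the number of output points, not $K$) keeps it within $\epsilon$ of the finite sample $\X$ too. Either way the Chamfer (Hausdorff) discrepancy between the generated cloud and $\X$ is $O(\epsilon)$, proving the statement. The main obstacle is the first step: one must pass from the open-cover definition of an atlas to compact sub-domains on which UAT actually applies, while guaranteeing the shrunken images still cover $\M^2$ — a standard but slightly technical Lebesgue-number / partition-of-unity argument — and then check that $K$ remains finite uniformly over the $M$ patches.
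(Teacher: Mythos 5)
Your argument is correct and is essentially the paper's approach in expanded form: the paper's own proof is a one-line deferral to \cref{thm:folding} and the UAT-based argument of AtlasNet~\cite{groueix2018}, and your finite-atlas, per-chart application of UAT with the patch-MLPs assembled into one network (plus the Chamfer bookkeeping) is precisely the reasoning that citation stands in for. No gap to report; you simply supply the details the paper leaves implicit.
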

\begin{proof}
The proof follows trivially from~\cref{thm:folding} and UAT~\cite{groueix2018}.
\end{proof}
Given these definitions, some of the typical 3D point decoders differentiate by making four choices~\cite{qi2017pointnet,groueix2018,Yang_2018_CVPR}:
\begin{enumerate}[itemsep=-1ex]
	\item An open set $U$ or discrete grid $\U\equiv\Pm=\{\p_i \in \R^2\}$.
	\item Distance function $d(\X,\hat{\X})$ between the reconstruction $\hat{\X}$ and the input shape $\X$.
	\item Parameterization function(s) $\Psi$.
	\item Parameters $(\bm{\theta})$ of $\Psi$: $\Psi(U\,|\, \bm{\theta})$.
\end{enumerate}

One of the first works in this field, PointNet~\cite{qi2017pointnet} is extended naturally to an AE by~\cite{achlioptas2017latent_pc} making arguably the simplest choice. We will refer to this variant as \textit{PointNet}. It lacks the grid structure $U=\emptyset$ and functions $\Psi$ only depend upon a single latent feature: $\Psi(U\,|\,\bm{\theta})=\Psi(\bm{\theta})=\text{MLP}(\cdot\,|\,\f\in\R^k)$. FoldingNet uses a two-stage MLP as $\Psi$ to warp a fixed grid $\Pm$ onto $\X$. A transition from FoldingNet to AtlasNet requires having multiple MLP networks operating on multiple 2D sets $\{\Pm_i\}$ constructed randomly on the domain $]0,1[^2$: $\bm{\mathcal{U}}(0,1)$. These explain the better learning capacity of AtlasNet: different MLPs learn to reconstruct distinct local surface patches by learning different charts.

Unfortunately, while numerous charts can be defined in the case of AtlasNet, all of the methods above still rely on a single latent feature vector, replicated and concatenated with $U$ to create the input to the decoders. However, point clouds are found to consist of multiple basis functions~\cite{Sung2018} and having a single representation governing them all is not optimal. We opt to go beyond this restriction and choose to have a set of latent features $\{\f_i\}$ to capture different, meaningful basis functions. 

With the aforementioned observations we can now re-write the well known 3D auto-encoders and introduce a new decoder formulation:\vspace{3mm}\newline
\begin{minipage}[c]{0.35\columnwidth}
\centering 
\underline{PointNet~\cite{qi2017pointnet}}
\begin{flalign}
\U&=\Pm = \emptyset \nonumber\\
\Psi(\bm{\theta}) &:= \text{MLP}(\cdot)\nonumber\\
\bm{\theta} &:= \f\nonumber\\
d(\X,\hat{\X}) &:=d_\text{{EMD}}(\X,\hat{\X})\nonumber
\end{flalign}
\end{minipage} 
\begin{minipage}[c]{0.55\columnwidth}
\centering 
\underline{AtlasNet~\cite{groueix2018}}
\begin{align}
\U=\{\Pm_i\}&:\Pm_i\in\bm{\mathcal{U}}(0,1)\\
\Psi(\bm{\theta}) &:= \{\text{MLP}_i(\cdot)\}\\
\bm{\theta} &:= \{\f, \{\Pm_i\}\}\\
d(\X,\hat{\X}) &:= d_\text{{CH}}(\X,\hat{\X})
\end{align}
\end{minipage}\vspace{3mm}
\begin{minipage}[c]{0.35\columnwidth}
\centering 
\underline{FoldingNet~\cite{Yang_2018_CVPR}}
\begin{flalign}
\U=\Pm &= \Grid^{M\times M}\nonumber\\
\Psi(\bm{\theta}) :=& \text{MLP}(\text{MLP}(\cdot))\nonumber\\
\bm{\theta} &:= \{\f, \Pm\}\nonumber\\
d(\X,\hat{\X}) &:= d_\text{{CH}}(\X,\hat{\X})\nonumber
\end{flalign}
\end{minipage} 
\begin{minipage}[c]{0.61\columnwidth}
\centering 
\underline{Ours}
\begin{align}
\U=\{\Pm_i\}&:\Pm_i\in\bm{\mathcal{U}}(0,1)\\
\Psi(\bm{\theta}) &:= \{\text{MLP}_i(\cdot)\}\\
\bm{\theta} := \{\F&\triangleq \{\f_i\}, \{\Pm_i\}\}\\
d(\X,\hat{\X}) &:= d_\text{{CH}}(\X,\hat{\X})
\end{align}
\end{minipage}\vspace{3mm}
where $d_\text{{EMD}}$ is the Earth Mover~\cite{rubner2000earth} and $d_\text{{CH}}$ is the Chamfer distance. $\Grid^{M\times M}=\{(i \otimes j) : \forall i,j \in [0,\dots,\frac{M-1}{M}]\}$ is a 2D uniform grid. $\f\in \R^{k}$ represents a k-dimensional latent vector. $\bm{\mathcal{U}}(a,b)$ depicts an open set defined by a uniform random distribution in the interval $]a,b[^2$. 

Note that it is possible to easily mix these choices to create variations\footnotemark[4]\footnotetext[4]{ FoldingNet presents evaluations with random grids in their appendix.}. Though, many interesting architectures only optimize for a single latent feature $\f$. To the best of our knowledge, one promising direction is taken by the capsule networks~\cite{hinton2011transforming}, where multitudes of convolutional filters enable the learning of a collection of \textit{capsules} $\{\f_i\}$ thanks to the dynamic routing~\cite{sabour2017dynamic}. Hence, we learn our parameters $\{\bm{\theta}_i\}$ by devising a new point cloud \textit{capsule decoder} that we coin \textit{3D-PointCapsNet}. We illustrate the choices made by four AEs under this unifying umbrella in~\cref{fig:aecompare}.


\begin{table*}[t!]
  \centering
  \caption{Descriptor matching results (recall) on the standard 3DMatch benchmark~\cite{zeng20163dmatch,Deng_2018_ECCV}.}
  \resizebox{\textwidth}{!}{
    \begin{tabular}{lccccccccc}
    \toprule
          & Kitchen  & Home 1  & Home 2  & Hotel 1  & Hotel 2  & Hotel 3  & Study  & MIT Lab  & Average  \\
          \midrule
          \midrule
     3DMatch~\cite{zeng20163dmatch}  & 0.5751 & 0.7372 & 0.7067 & 0.5708 & 0.4423 & 0.6296 & 0.5616 & 0.5455 & 0.5961 \\
     CGF~\cite{Khoury_2017_ICCV}  & 0.4605 & 0.6154 & 0.5625 & 0.4469 & 0.3846 & 0.5926 & 0.4075 & 0.3506 & 0.4776 \\
     PPFNet~\cite{deng2018ppfnet}  & 0.8972 & 0.5577 & 0.5913 & 0.5796 & 0.5769 & 0.6111 & 0.5342 & 0.6364 & 0.6231 \\
     FoldNet~\cite{Yang_2018_CVPR}  & 0.5949 & 0.7179 & 0.6058 & 0.6549 & 0.4231 & 0.6111 & 0.7123 & 0.5844 & 0.6130 \\
     PPF-FoldNet-2K~\cite{Deng_2018_ECCV}  & 0.7352 & 0.7564 & 0.625 & 0.6593 & 0.6058 & 0.8889 & 0.5753 & 0.5974 & 0.6804 \\
     PPF-FoldNet-5K~\cite{Deng_2018_ECCV}  & 0.7866 & 0.7628 & 0.6154 & 0.6814 & 0.7115 & \textbf{0.9444} & 0.6199 & 0.6234 & 0.7182 \\
     \midrule
     Ours-2K & \textbf{0.8518} & \textbf{0.8333} & \textbf{0.7740} & \textbf{0.7699} & \textbf{0.7308} & \textbf{0.9444} & \textbf{0.7397} & \textbf{0.6494} & \textbf{0.7867} \\
     \bottomrule
    \end{tabular}%
    \label{tab:3dmatchbenchmark}%
    }
    \vspace{-2mm}
\end{table*}%

\subsection{3D-PointCapsNet Architecture}
We now describe the architecture of the proposed \textit{3D-PointCapsNet} as a deep 3D point cloud auto-encoder, whose structure is depicted in~\cref{fig:3dpointcapsnet}. 

\vspace{-3mm}\paragraph{Encoder} The Input to our network is an $N \times d$ point cloud, where we fix $N=2048$ and for typical point sets $d=3$. Similarly to PointNet~\cite{qi2017pointnet}, we use a point-wise Multi-Layer Perceptron (MLP) $(3-64-128-1024)$ to extract individual local feature maps. In order to diversify the learning as suggested by capsule networks, we feed these feature maps into multiple independent convolutional layers with different weights, each with a distinct summary of the input shape with diversified attention. We then max-pool their responses to obtain a global latent representation. These descriptors are then concatenated into a set of vectors named \textit{primary point capsules}, $\F$. Size of $\F$ depends upon the size $S_c:=1024$ and the number $K:=16$ of independent kernels at the last layer of MLP. We then use the dynamic routing~\cite{sabour2017dynamic} to embed the primary point capsules into higher level \textit{latent capsules}. Each capsule is independent and can be considered as a \textit{cluster centroid} (codeword) of the primary point capsules. The total size of the latent capsules is fixed to $64\times 64$ (\ie, 64 vectors each sized 64).

\paragraph{Decoder}
Our decoder treats the latent capsules as a feature map and uses MLP$(64-64-32-16-3)$ to reconstruct a patch of points $\hat{\X}_i$, where $|\hat{\X}_i|=64$.
At this point, instead of replicating a single vector as done in~\cite{Yang_2018_CVPR, groueix2018}, we replicate the entire capsule $m$ times and to each replica we append a unique randomly synthesized grid $\Pm_i$ specializing it to a local area. This further stimulates the diversity. We arrive at the final shape $\hat{\X}_i$ by propagating the replicas through a final MLP for each patch and gluing the output patches together. We choose $m =32$ to reconstruct $|\hat{\X}|=32\times 64=2048$ points, the same amount as the input. 
Similar to other AEs, we approximate the loss over 2-manifolds by the discrete Chamfer metric:
\begin{align}
&d_{CH}(\X,\hat{\X}) =\\
&\frac{1}{|\X|}\sum\limits_{\x \in \X} \min_{\hat{\x}\in \hat{\X}} \| \x-\hat{\x} \|_2 + \frac{1}{|\hat{\X}|}\sum\limits_{\hat{\x} \in \hat{\X}} \min_{\x\in\X} \| \x-\hat{\x} \|_2\nonumber
\end{align}
However, this time $\hat{\X}$ follows from the capsules: $\hat{\X} = \cup_i \Psi_i(\Pm_i | \{\f_i\})$. 
\paragraph{Incorporating Optional Supervision}
Motivated by the regularity of capsule distribution over the 2-manifold, we created a capsule-part network that spatially segments the object by associating capsules to parts. The goal here is to assign each capsule to a single part of the object. Hence, we treat this part-segmentation task as a per-capsule classification problem, rather than a per-point one as done in various preceding algorithms~\cite{qi2017pointnet,Qi2017PointNetDH}. This is only possible due to the spatial attention of the capsule networks.

The input of capsule-part network is the latent-capsules obtained from the pre-trained encoder. The output is the part label for each capsule. The ground truth (GT) capsule labeling is obtained from the ShapeNet-Part dataset~\cite{yi2016scalable} in three steps: 1) reconstructing the local part given the capsule and a pre-trained decoder, 2) retrieving the label of the nearest neighbor (NN) GT point for each reconstructed point, 3) computing the most frequent one (mode) among the retrieved labels.

To associate a part to a capsule, we use a shared MLP with a cross entropy loss to classify the latent capsules into parts. This network is trained independently from the 3D-PointCapsNet AE for part supervision. We provide additional architectural details in the supplementary material.

\begin{table*}[t!]
  \centering
  \caption{Descriptor matching results (recall) on the rotated 3DMatch benchmark~\cite{zeng20163dmatch,Deng_2018_ECCV}.}
  \resizebox{\textwidth}{!}{
    \begin{tabular}{lccccccccc}
    \toprule
          & Kitchen  & Home 1  & Home 2  & Hotel 1  & Hotel 2  & Hotel 3  & Study  & MIT Lab  & Average  \\
          \midrule
          \midrule
     3DMatch~\cite{zeng20163dmatch}  & 0.0040 & 0.0128 & 0.0337 & 0.0044 & 0.0000 & 0.0096 & 0.0000 & 0.0260 & 0.0113 \\
     CGF~\cite{Khoury_2017_ICCV}  & 0.4466 & 0.6667 & 0.5288 & 0.4425 & 0.4423 & 0.6296 & 0.4178 & 0.4156 & 0.4987 \\
     PPFNet~\cite{deng2018ppfnet}  & 0.0020 & 0.0000 & 0.0144 & 0.0044 & 0.0000 & 0.0000 & 0.0000 & 0.0000 & 0.0026 \\
     FoldNet~\cite{Yang_2018_CVPR}  & 0.0178 & 0.0321 & 0.0337 & 0.0133 & 0.0096 & 0.0370 & 0.0171 & 0.0260 & 0.0233 \\
     PPF-FoldNet-2K~\cite{Deng_2018_ECCV}  & 0.7352 & 0.7692 & 0.6202 & 0.6637 & 0.6058 & 0.9259 & 0.5616 & 0.6104 & 0.6865 \\
     PPF-FoldNet-5K~\cite{Deng_2018_ECCV}  & 0.7885 & 0.7821 & 0.6442 & 0.6770 & 0.6923 & \textbf{0.9630} & 0.6267 & 0.6753 & 0.7311 \\
     \midrule
     Ours-2K & \textbf{0.8498} & \textbf{0.8525} & \textbf{0.7692} & \textbf{0.8141} & \textbf{0.7596} & 0.9259 & \textbf{0.7602} & \textbf{0.7272} & \textbf{0.8074} \\
     \bottomrule
    \end{tabular}%
  \label{tab:rot3dmatchbenchmark}%
  }
  \vspace{-1mm}
\end{table*}


\section{Experiments}
\label{sec:experiments}

We evaluate our method first quantitatively and then qualitatively on numerous challenging 3D tasks such as local feature extraction, point cloud classification, reconstruction, part segmentation and shape interpolation. We also include a more specific application of \textit{latent space part-interpolation} that is made possible by the use of capsules. For evaluation regarding these tasks, we use multiple benchmark datasets: ShapeNet-Core~\cite{chang2015shapenet}, Shapenet-Part~\cite{yi2016scalable}, ModelNet40~\cite{wu20153d} and 3DMatch benchmark~\cite{zeng20163dmatch}.

\paragraph{Implementation Details}
Prior to training, the input point clouds are aligned to a common reference frame and size normalized. To train our network we use an ADAM optimizer with an initial learning rate of 0.0001 and a batch size of 8. We also employ batch normalization (BN) and RELU activation units at the point of feature extraction to generate primary capsules. Similarly, the multi-stage MLP of the decoder also uses a BN and RELU units except for the last layer, where the activations are scaled by a $tanh(\cdot)$. During dynamic routing operation, we use the squash activation function mentioned in \cite{sabour2017dynamic,hinton2011transforming}.




\begin{table}[t]
  \centering
  \caption{Evaluating reconstruction quality. Oracle refers to a random sampling of the input 3D shape and constitutes an lower bound on what is achievable. The Chamfer Distance is multiplied by $10^3$ for better viewing. CD denotes \textit{Chamfer distance} and PB refers to \textit{Point Baseline}.}
  \vspace{2pt}
  \resizebox{\columnwidth}{!}{
    \begin{tabular}{lccccc}
          & Oracle & PB  & AtlasNet-25 & AtlasNet-125 & Ours \\
          \midrule
    CD    & 0.85  & 1.91  & 1.56  & 1.51  & \textbf{1.46} \\
    \bottomrule
    \end{tabular}%
    \label{tab:reconstruction}%
    }
    \vspace{-3mm}
\end{table}%

\subsection{Quantitative Evaluations}
\paragraph{3D Local Feature Extraction}
We first evaluate 3D Point-Capsule Networks on the challenging task of local feature extraction from point cloud data. 
In this domain, learning methods have already outperformed their handcrafted counterparts by a large margin and hence, we compare only against those, namely 3DMatch~\cite{zeng20163dmatch}, PPFNet~\cite{deng2018ppfnet}, CGF~\cite{Khoury_2017_ICCV}
and PPF-FoldNet~\cite{Deng_2018_ECCV}. PPF-FoldNet is completely unsupervised and yet is still the top performer, thanks to the FoldingNet~\cite{Yang_2018_CVPR} encoder-decoder. It is thus intriguing to see how its performance is affected if one simply replaces its FoldingNet auto-encoder with 3D-PointCapsNet. In an identical setting as~\cite{Deng_2018_ECCV}, we learn to reconstruct the 4 dimensional point pair features~\cite{birdal2015point, birdal2017point} of a local patch, instead of the 3D space of points, and use the latent capsule (codeword) as a 3D descriptor. To restrict the feature vector to a reasonable size of $512$, we limit ourselves only to $16 \times 32$ capsules. 
We then run the matching evaluation on the 3DMatch Benchmark dataset~\cite{zeng20163dmatch} as detailed in~\cite{Deng_2018_ECCV}, and report the recall of correctly founded matches after 21 epochs in~\cref{tab:3dmatchbenchmark}.

We note that our point-capsule networks exhibit an advanced capacity for learning local features, surpassing the state of the art by $10\%$ on the average, even when using $2K$ points unlike the $5K$ of PPF-FoldNet. It is also noteworthy that, except for the Kitchen sequence where PPFNet shows remarkable performance, the recall attained by our network consistently remains above all others. 
We believe that such dramatic improvement is related to the robustness of capsules towards slight deformations in the input data, as well as to our effective decoder.\vspace{-2mm}
\paragraph{Do Our Features Also Perform Well Under Rotation?} 
PPF local encoding of PPF-FoldNet is rotation-invariant. Being based on the same representation, our local feature network should enjoy similar properties. It is of interest to see whether the good performance attained on the standard 3DMatch benchmark transfers to more challenging scenes demanding rotation invariance. To this aim, we repeat the previous assessment on the Rotated-3DMatch benchmark~\cite{Deng_2018_ECCV}, a dataset that introduces arbitrary rotations to the scenes of~\cite{zeng20163dmatch}. Since this dataset contains 6DoF scene transformations, many methods that lack theoretical invariance, e.g. 3DMatch, PPFNet and FoldingNet simply fail. Our unsupervised capsule AE, however, is once again the top performer, surpassing the state of the art by $\sim 12\%$ on $2K$-point case as shown in Tab.~\ref{tab:rot3dmatchbenchmark}. This significant gain justifies that our encoder manages to operate also on the space of 4D PPFs, holding on the theoretical invariances.\vspace{-2mm}
\insertimageStar{.95}{part-seg_cropped.pdf}{Part segmentation by capsule association. Having pre-trained the auto-encoder, we append a final part-supervision layer and use a limited amount of data to specialize the capsules on object parts. (\textbf{a}) across the shapes of the same class capsules capture semantic regions. (\textbf{b}) inter-class part segmentation. Colors indicate different capsule groups and (\textbf{b}) uses only a simple median filter to smooth the results.\vspace{-2mm}}{fig:finetuning}{t!}
\paragraph{3D Reconstruction}
In a further experiment, we evaluate the quality of our architecture in point generation. 
We assess the reconstruction performance by the standard Chamfer metric and base our comparisons on the state of the art auto-encoder AtlasNet and its baselines (point-MLP)~\cite{groueix2018}. We rely on the ShapeNet Core v2 dataset~\cite{chang2015shapenet}, using the same training and test splits as well as the same evaluation metric as those in AtlasNet's~\cite{groueix2018}. We show in~\cref{tab:reconstruction} the Chamfer distances averaged over all categories and for $N>2K$ points. 
It is observed that our capsule AE results in lower reconstruction error even when a large number of patches (125) is used in favor of AtlasNet. This justifies that the proposed network has a better summarization capability and can result in higher fidelity reconstructions. \vspace{-1mm}
\begin{table}[t]
  \centering
  \caption{Accuracy of classification by transfer learning on the ModelNet40 dataset. Networks are trained out ShapeNet55, except \textit{Ours-Parts} that is trained on smaller ShapeNet-Parts dataset.}
  \vspace{2pt}
  \resizebox{\columnwidth}{!}
  {
    \begin{tabular}{lcccc}
    & {Latent-GAN\cite{achlioptas2017latent_pc}} & FoldingNet\cite{Yang_2018_CVPR} & Ours-Parts & {Ours} \\
    \toprule
     \text{Acc.} & 85.7& 88.4 & 88.9 & \textbf{89.3} \\
    \bottomrule
    \end{tabular}%
  \label{tab:transfer_cls}%
 }
 \vspace{-1.5mm}
\end{table}%

\paragraph{Transfer Learning for 3D Object Classification}
In this section, we demonstrate the efficiency of learned representation by evaluating the classification accuracy obtained by performing transfer learning. Identical to \cite{wu2016learning,achlioptas2017latent_pc,Yang_2018_CVPR}, we train a linear SVM classifier so as to regress the shape class given the latent features. To do that, we reshape our latent capsules into a one dimensional feature and train the classifier on Modelnet40~\cite{wu20153d}. We use the same train/test split sets as \cite{Yang_2018_CVPR} and obtain the latent capsules by training 3D-PointCapsNet on a different dataset, the ShapeNet-Parts~\cite{yi2016scalable}. The training data has 14,000 models subdivided into 16 classes.
The evaluation result is shown in~\cref{tab:transfer_cls}, where our AE, trained on a smaller dataset compared to the ShapeNet55 of~\cite{achlioptas2017latent_pc,Yang_2018_CVPR} is capable of performing at least on par or better. This shows that learned latent capsules can handle smaller datasets and generalize better to new tasks. We also evaluated our classification performance when the training data is scarce and obtained similar result as the FoldingNet, $\sim85\%$ on $\sim20\%$ of training data.
\begin{table}[t!]
  \centering
  \caption{Part segmentation on ShapeNet-Part by learning only on the $x\%$ of the  training data.}
  \vspace{1pt}
  \resizebox{\columnwidth}{!}{
    \begin{tabular}{lcccc}
    {Metric} & SONet-$1\%$ & Ours-$1\%$ & SONet-$5\%$& Ours-$5\%$ \\
    \toprule
    Accuracy & 0.78 & \textbf{0.85} & 0.84 & \textbf{0.86}\\
    IoU & 0.64 & \textbf{0.67} & 0.69 & \textbf{0.70}\\
    \bottomrule
    \end{tabular}%
  \label{tab:socompare}%
  }
  \vspace{-3mm}
\end{table}%

\subsection{Qualitative results}
\paragraph{3D Object Part Segmentation with Limited Data}
We now demonstrate the regional attention of our latent capsule and their capacity to learn with limited data. To this end, we trained 3D-PointCapsNet on the ShapeNet-Part dataset~\cite{chang2015shapenet} for part segmentation as explained in~\cref{sec:method}, with a supervision of only $1-5\%$ part labeled training data. We tested our network on all of the available test data. To specialize the capsules to distinct parts, we select as many capsules as the part labels and let the per-capsule classification coincide to part predictions. Predicted capsule labels are propagated to the related points. For the sake of space, we compared our results only with the state of the art on this dataset, the SO-Net~\cite{lin2018learning}. We use identical evaluation metrics as SO-Net~\cite{lin2018learning}: \textit{Accuracy} and \textit{IoU} (Intersection over Union), and report our findings in~\cref{tab:socompare}. Note that, when trained on $1\%$ of input data, we perform $7\%$ better than SO-Net. When the amount of training data is increased to $5\%$, the gap closes but we still surpass SO-Net by $2\%$, albeit training a smaller network to classify latent-capsules rather than 3D points.\vspace{-1mm}
\insertimageC{1}{conv_DR_compare_cropped.pdf}{Distribution of 10 randomly selected capsules on the reconstructed shape after unsupervised autoencoder training \textbf{a}) with dynamic routing, \textbf{b}) with a simple convolutional layer.\vspace{-3mm}}{fig:unsupcaps}{t!}
\insertimageStar{.95}{part_interp_final_cropped.pdf}{Part interpolation on the Shapenet-Part~\cite{yi2016scalable} dataset. $\text(\textbf{left})$ The source point cloud. $\text(\textbf{right})$ Target shape. $\text(\textbf{middle})$ Part interpolation. Fixed part is marked in light blue and the interpolated part is highlighted. Capsules are capable of performing part interpolation purely via latent space arithmetic. \vspace{-5mm}}{fig:partinterp}{t!}
\paragraph{Does unsupervised training lead to specialized capsules?}
It is of interest to see whether the original argument of the capsule networks~\cite{sabour2017dynamic,hinton2011transforming} claiming to better capture the intrinsic geometric properties of the object still holds in the case of our unsupervised 3D-AE. To this aim, we first show in~\cref{fig:unsupcaps} that even with lack of supervision the capsules specialize on local parts of the model. While these parts may sometimes not correspond to the human annotated part segmentation of the model, we still expect them to concentrate on semantically similar regions of the 2-manifold.~\cref{fig:unsupcaps} visualizes the distribution of 10 capsules by coloring them individually and validates our argument.

To validate our second hypothesis, stating that such clustering arises thanks to the dynamic routing, we replace the DR part of the AE with standard PointNet-like layers projecting the $1024\times 64$ PPC to 64$^2$ capsules and repeat the experiment.~\cref{fig:unsupcaps} depicts the spread of the latent vectors over the point set when such layer is employed as opposed to DR. Note that using this simple layer instead of DR both harms the reconstruction quality and yields an undesired spread of the capsules across the shape. We leave it as a future work to study the DR energy theoretically and provide more details on this experiment in the supplement.




\vspace{-3mm}
\paragraph{Semi-supervision guides the capsules to meaningful parts.}
We now consider the effect of training in steering the capsules towards the optimal solution in the task of supervised part segmentation. First, we show in~\cref{fig:finetuning} the results obtained by the proposed part segmentation: (\textbf{a}) shows part segmentation across multiple shapes of the same class. These results are also unfiltered and the raw outcome of our network. (\textbf{b}) depicts part segmentation across a set of object classes from Shapenet-Part. It also shows that some minor confusions present in (\textbf{a}) can be corrected with a simple median filter. This is contrary and computationally preferable to costly CRFs smoothing the results~\cite{wang2017cnn}. 

Next, we observe that, as training iterations progress, the randomly initialized capsules specialize to parts, achieving a good part segmentation at the point of convergence. We visualize this phenomenon in~\cref{fig:evolution}, where the capsules that have captured the wings of the airplane are monitored throughout the optimization procedure. Even though the initial random distribution is spatially spread out, the resulting configuration is still part specific. This is a natural consequence of our capsule-wise part semi-supervision. 

\insertimageC{1}{model_evolution.pdf}{Visualizing the iterations of unsupervised AE training on the airplane object. For clear visualization, we fetch the colors belonging to the $\sim$20 capsules of the wing-part from our part predictions trained with part supervision.\vspace{-3mm}}{fig:evolution}{t!}
\vspace{-2mm}
\paragraph{Part Interpolation / Replacement}
Finally, we explore the rather uncommon but particularly interesting application of interpolating, exchanging or switching object parts via latent-space manipulation.
Thanks to the fact that 3D-PointCapsNet discovers multiple latent vectors specific to object attributes/shape parts, our network is capable of performing per-part processing in the latent space. To do that, we first spot a set of latent capsule pairs belonging to the same parts of two 3D point shapes and intersect them. Because these capsules explain the same part in multiple shapes, we assume that they are specific to the part under consideration and nothing else.
We then interpolate linearly in the latent space between the selected capsules. As shown in ~\cref{fig:partinterp} the reconstruction of intermediate shapes vary only at a single part, the one being interpolated. When the interpolator reaches the target shape it \textit{replaces} the source part with the target one, enabling \textit{part-replacement}. \cref{fig:partreplace} further shows this in action. Given two shapes and latent capsules of the related parts, we perform a part exchange by simply switching some of the latent capsules and reconstructing. Conducting a part exchange directly on the input space by a cut-and-place would yield inconsistent shapes as the replaced parts would have no global coherence.

\insertimageC{1}{allign.pdf}{Part replacement. Performing replacement in the latent space rather than Euclidean space of 3D points yields geometrically consistent outcome.\vspace{-3mm}}{fig:partreplace}{t!}




%
%

\vspace{-2mm}\section{Conclusion}\vspace{-1mm}
\label{sec:conclusion}
We have presented 3D Point-Capsule Network, a flexible and effective tool for 3D shape processing and understanding. We first presented a broad look to the common point cloud AEs. With the observation that a one dimensional latent embedding, the choice of the most preceding auto-encoders, is potentially sub-optimal, we opted to summarize the point clouds as a union of disjoint latent basis functions. We have shown that such choice can be implemented by learning the embedded \textit{latent capsules} via dynamic routing. 
Our algorithm proved successful on an extensive evaluation on many 3D shape processing tasks such as 3D reconstruction, local feature extraction and part segmentation. Having a latent capsule set rather than a single vector also enabled us to address new applications such as part interpolation and replacement. In the future, we plan to deploy our network for pose estimation and object detection from 3D data, currently two of the key challenges in 3D computer vision.

\noindent \textbf{Acknowledgements }
We would like to thank Yida Wang, Shuncheng Wu and David Joseph Tan for fruitful discussions. This work is partially supported by China Scholarship Council (CSC) and IAS-Lab in the Department of Information Engineering of the University of Padova, Italy.    
{\small

}
\setcounter{section}{0}
\renewcommand\thesection{\Alph{section}}
\newcommand{\suppsection}{\subsection}
\section*{Appendix}\nonumber
\section{Semi-supervised Classification}
We begin by showing semi-supervised classification results in~\cref{tab:semi}. Note that our network can generate predictions that are on par with or better than FoldingNet~\cite{Yang_2018_CVPR}. \vspace{-5pt}
\begin{table}[htbp]
  \centering
  \caption{Part segmentation on ShapeNet-Part by learning on limited training data. The table shows the accuracies obtained by FoldingNet~\cite{Yang_2018_CVPR} and our approach for different amount of training data. }
  \resizebox{0.9\columnwidth}{!}{
    \begin{tabular}{lccccc}
     & $1\%$ & $2\%$ & $5\%$& $20\%$& $100\%$ \\
    \toprule
    FoldingNet & 56.15 & 67.05 & 75.97 &84.06 &88.41 \\
    Ours & 59.24 & 67.67 & 76.49 & 84.48 & 88.91\\
    \bottomrule
    \end{tabular}%
  \label{tab:semi}%
  }
\end{table}%

\vspace{-5pt}
\section{Part Segmentation}
We first give a small summary of the part association network for optional supervision. The input to this one-layer architecture is the latent capsules combined with one-hot vector of the object category. The output is the part prediction of each capsule. We use the cross entropy loss as our loss function and Adam as the optimizer with the learning rate of 0.01. The network structure is shown in Fig.~\ref{fig:supp-partseg-pipline}.

Then we utilize the pre-trained decoder to reconstruct the object with the labeled capsules. Fig.~\ref{fig:supp-partseg} depicts further visualizations for different objects from the ShapeNet-Part dataset~\cite{yi2016scalable}. Our results are also qualitatively comparable to ground truth.
\insertimageC{1}{Capsule-part-association2.pdf}{Supervising the 3d point capsule networks for part prediction. Instead of performing a point-wise part labeling, we use a capsule-wise association requiring less data annotation efforts.}{fig:supp-partseg-pipline}{hbtp}

\section{Part Interpolation}
We first show an overview of how we perform part interpolation. While this part has been thoroughly explained in the paper, we have omitted this architecture illustration due to space considerations. We now provide this in~\cref{fig:pipeline-interp}.

Next we show, the part interpolation results on different objects. In this qualitative evaluation, we are given two shapes and the goal is to interpolate the source part towards the target. To do that we find the matching capsules that represent the part of interest in both shapes. We then linearly interpolate from the capsule(s) of the source to the one(s) of the target. This generates visually pleasing intermediate shapes, which our network has never seen before. Here we see that the learned embedding resemble a Euclidean space where linear latent space arithmetic is possible. It is also visible that such interpolation scheme can handle topological changes such as merging or branching legs. In the end of interpolation a new shape is generated in which the part is replaced completely with the target's. That brings us to our second and interesting application, part replacement.

\section{Part Replacement}
We now supplement our paper by presenting additional qualitative results on the task of part replacement.~\cref{fig:supp-partrep} shows numerous object pairs where a part-of-interest is selected in both and exchanged by the help of latent space capsule arithmetic. Analogous to the ones in the paper we also show a cut-and-paste operation that is a mere exchange of the parts in 3D space, obviously resulting in undesired disconnected shapes. Thanks to our decoder's capability in generating high fidelity shapes, our capsule-replacement respects the overall coherence of the resulting point cloud.

\section{Ablation Study}
In order to show the prosperity of the dynamic routing, we compare the reconstruction result by replacing the DR with PointNet-like set of convolutional layers. In this ablation study, the primary point capsules ($1024\times16$) are considered as 1024 point-features and each point has the feature dimension of 16. We utilize a shared MLP to increase the feature dimension from 16 to 64. After conducting max pooling, we can obtain a vector of length 64. With multiple MLPs and max-pooling, we are able to generate 64 vectors which have the same dimensions as the latent capsules produced by dynamic routing. The structure of this comparison module is shown in Fig.~\ref{fig:supp-convcompare}. To carry out our fair evaluation, we re-train the whole AE with this module. The result of the reconstruction is shown in Fig. 5 of the main paper.
\insertimageC{1}{conv_compare.pdf}{The structure of the comparison module that operates on the primary point capsules and generates a set of vectors having the same dimensionality as the latent capsule output of DR.}{fig:supp-convcompare}{hbtp}

\section{A Discussion on the Local Spatial Attention}
Our network consists of multiple MLPs acting on a single capsule. It encodes the part information inside that capsule rather than the MLPs themselves.
For that reason, the local attention stems from both the organization of primary point capsules (in our case obtained by dynamic routing) and potentially the decoder (see Fig. 5 of the main paper). Thus, we are able to control and represent the shape instantiation in the latent space as shown in \textit{part interpolation/replacement} evaluations. 
Contrarily, AtlasNet reconstructs different local patches with different MLPs from the same latent vector. This embeds the part knowledge into the MLPs, making it challenging to control the shape properties. 

\insertimageStar{1}{supp-part-seg01_cropped.pdf}{Part segmentation on limited amount of training data.}{fig:supp-partseg}{b!}
\insertimageStar{1}{pipeline-interp_cropped.pdf}{Our interpolation / replacement pipeline.}{fig:pipeline-interp}{hbtp}
\insertimageStar{1}{supp-interpolation_cropped.pdf}{Visualization of part interpolation from source shape part to target. By simple linear interpolation on the correspondent capsule(s), smooth intermediate topologies could be generated. }{fig:supp-partinterp}{hbtp}

\insertimageStar{1}{supp-part-rep01_cropped.pdf}{Part replacement visualization and comparison. By operating in the latent space, more natural replacement results could be obtained, without suffering from the detachment problems as with simple Cut \& Paste method.}{fig:supp-partrep}{hbtp}

\end{document}